\DeclareMathOperator{\diag}{diag}
\DeclareMathOperator{\image}{Im}
\DeclareMathOperator{\circulant}{Circ}
\DeclareMathOperator{\trace}{Tr}
\newcommand{\vct}[1]{\bm{#1}}
\newcommand{\mtx}[1]{\bm{#1}}
\newtheorem{definition}{Definition}
\newtheorem{lemma}{Lemma}
\newtheorem{theorem}{Theorem}
\newtheorem{corollary}{Corollary}
\title{What Happens on the Edge, Stays on the Edge: \\ Toward Compressive Deep Learning}
\author{Yang Li and Thomas Strohmer  \\
Department of Mathematics \\
University of California, Davis \\
Davis, CA 95616, USA \\
\texttt{\{ly,strohmer\}@math.ucdavis.edu} \\
}
\begin{document}

\maketitle

\begin{abstract}
Machine learning at the edge offers great benefits such as  increased privacy and security, low latency, and more autonomy. However, a major challenge is that
many devices, in particular edge devices, have very limited memory, weak processors, and scarce energy supply. We propose a hybrid hardware-software framework that has the potential to significantly reduce the computational complexity and memory requirements of on-device machine learning.  In the first step, inspired by compressive sensing, data is collected in compressed form simultaneously with the sensing process. Thus this compression happens already at the hardware level during data acquisition. But unlike in compressive sensing, this compression is achieved via a  projection operator that is specifically tailored to the desired machine learning task. The second step consists of a specially designed and trained deep network. As concrete example we consider the task  of image classification, although the proposed framework is more widely applicable. An additional benefit of our approach is that it can be easily combined with existing on-device techniques. Numerical simulations illustrate the viability of our method.
\end{abstract}

\section{Introduction}

\subsection{Machine Learning on Constrained Devices}


While the leitmotif  {\em ``local sensing and remote inference''}\footnote{We borrowed this slogan
from~\cite{Deisher}}  has been at the foundation of a range of successful AI applications, there exist many
scenarios in which it is highly preferable to run machine learning algorithms either directly on the device or at least locally on the edge instead of remotely. As tracking and selling our digital data has become a booming business model~\cite{zuboff2019}, there is an increasing and urgent need to preserve privacy in digital devices.
Privacy and security are much harder to compromise if data is processed locally instead of being sent to the cloud. Also, some AI applications may be deployed ``off the grid'', in regions far away from any mobile or internet coverage. In addition, the cost of communication may become prohibitive and scaling to millions of devices  may be impractical~\cite{Deisher}. Moreover, some applications cannot afford latency issues that might result from remote processing. 

Hence, there is a strong incentive for local inference ``at the edge'', {\em if} it can be accomplished with little sacrifice in 
accuracy and within the resource constraints  of the edge device. Yet, the difficulty in ``pushing AI to the edge''  lies exactly in these resource constraints of many edge devices regarding computing power, energy supply, and memory.

In this paper we focus on  such {\em constrained devices} and propose a hybrid hardware-software framework that has the potential to significantly reduce the computational complexity and memory requirements of on-device machine learning.

\if 0
Convolutional neural networks along with a large family of deep learning models have fundamentally changed what people believe a machine is able to achieve in mimicking human activities, including visual understanding and language capabilities. The fact that a person is able to learn a language or understand images and videos is a sign of intelligence that some believe is only held by intelligent life forms like human beings. These capabilities may even define what intelligence is, such as the well-known Turing test. As a result, the advent of deep learning and its tremendous success in many tasks have at least shaken the idea of defining intelligence by visual and linguistic capabilities, and even spark heated debates over whether artificial intelligence will come to life in the near future. However, under a lot of circumstances, applications of artificial neural networks are not energy-efficient, especially when compared with human brains, the biological neural networks that inspires artificial neural networks, conducting the same type of tasks. {\color{red} REFERENCE NEEDED} Besides energy consumption, time complexity is another practical issue in its application. Without mentioning CPU time needed for training the model and indispensable human interference in fine-tuning, even after a deep learning model is fully trained, the overhead cost of loading and processing the input data is sometimes too large to be useful. {\color{red} REFERENCE NEEDED} These issues become more salient in the scenario of conducting time or energy costly tasks on constrained devices.

For instance, constrained by its volume and weights, a drone can only be equipped with a limited amount of batteries. Alternative energy sources like solar power are not reliable under various weather conditions and relatively low in wattage as determined by the projected area. Given the situation of energy supply, it would not be cost-effective to install a high-performance processor in a drone since it could easily drain the battery power way before the end of the flight. For the same reason, a mobile device can be constrained in certain applications, and its processing power is further restricted by heat dissipation, a problem caused by harsh requirements on its volume and shape.

With the advent of the Internet of Things (IoT), there will be numerous constrained devices conducting tasks from image processing and diagnostics to speech recognition. Deploying machine learning algorithms under these conditions will become a  challenging issue.

The first question we need to ask is: \textit{What are we really trying to achieve?} To do machine learning on constrained devices, it is reasonable to restrict the problems to be solved to a certain subset of machine learning problems so that it is restrictive enough to be practical while being broad enough to include a wide range of possible applications. This makes sense in practice, because usually we know a priori what kind of machine learning tasks a device is supposed to conduct before it gets deployed. For constrained devices, time and energy cost intense computations should be done offline as much as possible. Hence, it is logical to pre-train the machine learning model elsewhere before deployment, either by a software update or hardware implementation. 
In this paper we will focus the category of classification problems as an appropriate choice for machine learning on constrained devices.

Learning on-the-fly is certainly desirable, but only in the situation where the time complexity and energy supply permit. Therefore, supervised learning is more favorable compared with unsupervised learning, since supervised learning models usually outperform unsupervised counterparts and constrained devices are not supposed to handle unexpected scenarios extremely well with limited resources, which nullifies the major strength of unsupervised learning models. The supervised learning model can be well-prepared and thoroughly tested before conducting tasks, and is expected to be efficient and accurate when it is applied. Within the realm of supervised learning, there are prediction problems and classification problems. The major difference between the two is that the output of a prediction problem is a number, while the output of a classification problem is a label, usually out of a relatively small discrete set. Thus, classification problem is comparatively easier in the sense that it is less prone to rounding errors and less affected by crude approximations. This advantage of classification problems suits well with constrained devices. In addition, there are numerous real-world applications of classification algorithms, including but not limited to image classification, spam filtering, medical diagnosis and voice classification. Since the goal is to find the label, input data may be transformed irreversibly, unlike AutoEncoder or compressive sensing where data reconstruction is required.

In summary, the category of classification problems is an appropriate choice for machine learning on constrained devices.

The more involved question is: \textit{Can we solve these problems more efficiently?} There are at least four possible ways to attempt to achieve this goal.

Currently there is tremendous activity in moving machine learning to edge devices. To that end, a range of different ideas have been proposed to 
improve on-device machine learning.

\begin{enumerate}
\item
Shrink the size of model: If we have already trained a machine learning model that works well, we can replace the model with an approximation that works almost as well but has much smaller size, thereby reducing the computational cost and the memory footprint~\cite{ravi2017projectionnet}. For neural networks, pruning and matrix decomposition are two most common way to reduce their size~\cite{han2015deep,louizos2017bayesian}.
\item Specialized algorithms: One attempt to reduce memory requirements and comptational cost is to develop algorithms that operate on fixed-point quantized data ~\cite{lin2016fixed}. Yet another idea on the algorithmic front is to consider distributed machine learning, where one assumes that the data are distributed and processed over a large number of devices~\cite{konecny2016federated}.
\item
Specifically designed  processors: Several companies have begun to develop specialized processors for on-device deep learning, cf.\ e.g.~\cite{qualcomm2017,google2018,intel2018}.
\end{enumerate}

 \fi


\subsection{Outline of our approach}

When we deploy an AI-equipped device in practice, we know a priori what kind of task this device is supposed to carry out.
The key idea of our approach can thus be summarized as follows: We can take this knowledge into account already in the data acquisition step itself and try to measure only the task-relevant information, thereby  we significantly lower the size of the data that enter the device and thus reduce the computational burden and memory requirements of that device.

To that end we propose {\em compressive deep learning}, a hybrid hardware-software approach that can be summarized by two steps, see also Figure~\ref{fig:compdl}. First, we construct a projection operator that is specifically tailored to the desired machine learning task and which is determined by the entire training set (or a subset of the training set). This projection operator compresses the data simultaneously with the sensing process, like in standard compressive sensing~\cite{FR13}. But unlike compressive sensing, our projection operator is tailored specifically to the intended machine learning task, which therefore allows for a much more ``aggressive'' compression.
The construction of the projection operator is of course critical and various techniques are described in Section~\ref{s:math}.
This projection will be implemented in hardware, thus the data enter the software layer of the device already in compressed form. The data acquisition/compression step is followed by a deep network that processes the compressed data and carries out the intended task.

\begin{figure}[ht]
\centering
\includegraphics[width=0.95\textwidth]{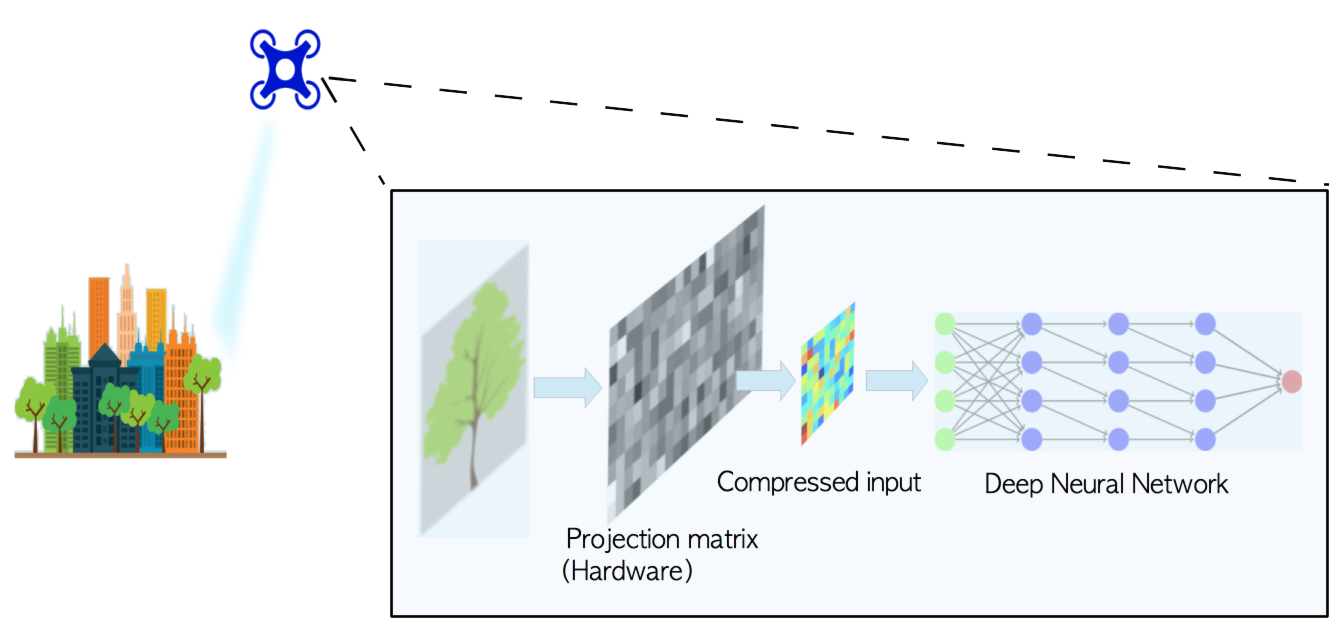}
\caption{Schematic depiction of compressive deep learning: Data acquisition and compression are carried out simultaneously. Compression is achieved at the hardware level via a projection operator that is specifically tailored to the desired machine learning task. The so compressed data are then fed into a specially trained deep network that performs the intended task.}
\label{fig:compdl}
\end{figure}

While our approach is applicable to a wide range of machine learning tasks, in this paper we will mainly focus on image classification. We emphasize that the projection/compression we carry out at the data acquisiton step is completely different from standard (jpeg-type) image compression. Firstly, standard impage compression happens at the software layer and is completely independend from the data acquisition step, while our proposed compression-while-sensing scheme is an intrinsic part of the data acquisition step. Secondly, standard image compression is designed to work for a vast range of images and independent of the task we later perform, while our compression scheme is inherently tied to the image classification task we intend to carry out.


\subsection{Compressive sensing and beyond}

As mentioned above, our approach is in part inspired by ideas from compressive sensing.
The compressive sensing paradigm uses simultaneous sensing and compression. At the core of compressed sensing lies the discovery that it is possible to reconstruct a sparse signal exactly from an underdetermined linear system of equations and that this can be done in a
computationally efficient manner via $\ell_1$-minimization, cf.~\cite{FR13}. Compressive sensing consists of two parts: (i)~The sensing step, which simultaneously compresses the signal. This step is usually implemented (mostly) in hardware. (ii)~The signal reconstruction step via carefully designed algorithms (thus this is done by software).

Compressed sensing aligns with a  few of our objectives, however, they also differ in the following crucial ways:

\begin{enumerate}
\item
Adaptivity: Standard compressive sensing is not adaptive. It considers all possible sparse signals under certain representations. For different data sets, the fundamental assumptions are the same. This assumption is likely too weak for a specific image data set, where only images with certain characteristics are included. For image classification, this is usually the case. Statistical information in the data set may be exploited to achieve better results.

\item
Exact reconstruction: Compressive sensing aims for exact signal reconstruction. That means enough measurements must be taken to ensure all information needed for exact recovery. Clearly, this is too stringent a constraint for image classification where only label recovery is required instead of full signal reconstruction.

\item
Storage and processing cost: It is cumbersome to implement random matrices often proposed in compressive sensing in hardware; only certain structured random matrices can be implemented efficiently.
\end{enumerate}

\subsection{Prior work}
Various approaches have been proposed for improving the computational cost of neural networks and for on-device deep learning.  We point out that in the approaches described below, the assumptions made about the properties of these devices and their capabilities may differ for different approaches.

\begin{enumerate}
\item
Redesigning architecture: Some works proposed new architectures to alleviate the computational burden. This includes MobileNet, LCCL, SqueezeNet, etc.\ (\cite{iandola2016squeezenet, howard2017mobilenets, dong2017more}).

\item
Quantization: It is observed that representing weights in neural networks with lower precision does not lead to serious performance drop in testing accuracy. In this way, the computational cost of basic arithmetic operations is reduced while the model's capability of predicting correct labels is largely preserved.

\item
Matrix decomposition: In terms of runtime, convolutional layers are usually the most costly component of a convolutional neural network. However, many of the weights in these layers are redundant and a large amount of insignificant calculations can be avoided. Instead of quantizing the weights, methods from this category exploit the sparsity of the trained convolutional layers and seek to find some low rank representations of these layers.

\item
Pruning: To exploit the sparsity in neural networks, besides treating each layer as a whole and find a global replacement, it is also possible to deal with each node or connection locally. For a pre-trained neural network, connections with low importance can be removed, reducing the total number of weights. More advanced pruning techniques have been developed, see e.g.~\cite{han2015deep,louizos2017bayesian,aghasi2017net}.

\item Hardware:  Several special purpose processors have been designed for on-device machine learning, see e.g.~\cite{qualcomm2017,intel2018,google2018}.
Recently, an all-optical machine learning device has been built, see~\cite{lin2018all}. The device can perform image classification at the speed of light without needing external power, and it achieves up to 93.39\% classification accuracy for MNIST.
Another work describes a hardware implementation of the kernel method, which is potentially a universal pre-processor for any data~(\cite{saade2016random}. An image is first encoded into a beam of monochromatic laser, and then projected on a random medium. Because of frequent scattering with the nanoparticles in the medium, the transmitted light can be seen as a signal obtained by applying a kernel operation with a random matrix involved. The resulting signal is then collected by a camera and can be fed into a machine learning model. No energy is consumed except for the incoming laser signal during the process.

\end{enumerate}

Our proposed approach is different from all the methods listed above, but can be combined with any one of them. 

Other approaches towards compressing the input in a specific manner before clustering or classificiation can be found e.g.\ in~\cite{davenport2007smashed,hunter2010compressive,ruta2012compressive,reboredo2013compressive,tremblay2016compressive,mcwhirter2018squeezefit}.
However, except for~\cite{mcwhirter2018squeezefit}, these papers do not taylor the compression operator to the classification task, but are rather using compression operators that are in line with classical compressive sensing.

\section{Compressive Deep Learning}
\label{s:math}

In a nutshell our framework can be summarized as follows. First, we construct a compression matrix in form of a projection operator, determined by the entire training set (or a subset of the training set). This compression matrix will be implemented  in hardware, that automatically feeds the compressed data into the next layer of the system. Following the compression matrix there is a neural network that processes the compressed data and recovers the labels. 

There are many possibilities to construct the projection operator. It is important to keep in mind that ultimately the projection/compression step is supposed to be realized in hardware. Therefore it makes sense to impose some structure on the projection operator to make it more amendable to an efficient hardware implementation. 
Unlike compressive sensing, we will not use a random matrix that samples the image space  essentially uniformly at random, but instead we construct a projector that focuses on the regions of interest, i.e., we concentrate our measurement on those regions of the ambient space, in which the images we aim to classify are located, thereby preserving most information with a small number of measurements. To that  end, principal component decomposition would suffice to construct the projection matrix. However, a typical PCA matrix is unstructured and is thus hard to implement efficiently, easily and at low cost in hardware. 

Hence, we need to impose additional condition on the projection operator to be constructed. There is a range of options, but the most convenient one is arguably to consider projections with convolution structure. Convolutions are ubiquitous in signal- and image processing, they are a main ingredient of many machine learning techniques (deep learning being one of them), and they can be implemented efficiently in hardware~\cite{rabiner1975}. 

We will consider two approaches to construct such a convolution-structured projection:
\begin{enumerate}
\item We try to find among all convolution matrices with orthogonal rows the one that is ``most similar''  to the PCA matrix.  While this {\em matrix nearness problem} is non-convex, we will prove that there is a convex problem ``nearby'' and that this convex problem has a convenient explicit solution. This construction of the projection matrix is independent of the CNN we use for image classification.
\item We construct the convolution projection by jointly optimizing the projection matrix and the CNN used for image classification. We do this by adding a ``zeroth'' convolution layer to our image classification CNN. The weights of this zeroth layer will give us the coefficients of the (nonunitary) convolution projection matrix. Of course, for the actual image classification we later remove this zeroth layer, since the whole point is to implement this layer in hardware. In theory this should yield a projection matrix with superior performance, because  this approach jointly optimizes the projection and the classification. But due to the non-convex nature of this optimization problem, there is no guarantee that we can actually find the optimal solution.
\end{enumerate}

\subsection{Construction of projection with convolution structure via constrained matrix nearness}
\label{ss:circapprox}

Given a data set ${\cal D} \subseteq {\mathbb R}^M$, one can find all principal components $\vct{w}_1, \vct{w}_2, \dots, \vct{w}_M$ of this finite point set, which form an orthonormal basis of the space ${\mathbb R}^M$. These $m$ vectors can be grouped into a matrix, with each row being one of the components,
\begin{equation*}
\mtx{P} = 
\begin{bmatrix}
\vct{w}_1^\top, \vct{w}_2^\top, \dots, \vct{w}_M^\top
\end{bmatrix}^\top
\in {\mathbb R}^{M \times M}.
\end{equation*}

The matrix $\mtx{P}$ contains full information about the best approximating affine subspaces of different dimensions to the data set ${\cal D}$. Dimension reduction can be achieved by using the first few rows of $\mtx{P}$. For any $s \leq m$, we can construct
\begin{equation*}
\mtx{P}_s = 
\begin{bmatrix}
\vct{w}_1^\top, \vct{w}_2^\top, \dots, \vct{w}_s^\top
\end{bmatrix}^\top
\in {\mathbb R}^{s \times M},
\end{equation*}
which can serve as a compression matrix that reduces the dimension of the input data from $m$ to $s$. Here, the ratio $\frac{m}{s}$ represents the compression rate of the input.

Yet, in general $\mtx{P}_s$ is not a structured matrix which is required for efficient hardware implementation. As we mentioned before, we try to find a convolution-type matrix that is in some sense as close as possible to  $\mtx{P}_s$. Since we focus on image classification, our input signals are images. Therefore, we consider two-dimensional convolutions. In terms of matrices that means we either deal with block-circulant matrices with circulant blocks (BCCB) or block-Toeplitz matrices with Toeplitz blocks~\cite{davis1979circulant}. Here, for convenience we focus on BCCB matrices. Note however that  as projection, $\mtx{P}_s$ is naturally a fat matrix  (there are more columns than rows), while  BCCB are square matrices by definition. Thus, we are concerned with finding the best approximation of $\mtx{P}_s$ by  subsampled versions of  BCCB matrices. 

Assume samples in the data set ${\cal D}$ are all two-dimensional signals of size $m \times n$; vectorizing these signals yields vectors of length $mn$. The space of subsampled matrices with structure is the space of sub-block-circulant matrices, denoted by $\mathcal{S}_{s, m, n}$, whose elements are matrices with the same dimensions as $\mtx{P}_s$. See the appendix for an exact definition of sub-block-circulant matrices and $\mathcal{S}_{s, m, n}$. We formulate the problem of constructing a convolution-structured projection matrix close to the PCA matrix as  follows: find the sub-block-circulant  matrix $\mtx{C}$, an element of $\mathcal{S}_{s, m, n}$, that is most similar to $\mtx{P}_s$.

How do we measure similarity between matrices? One way to think about this is to identify each matrix with its row space. In our example, the row space of any $\mtx{C} \in \mathcal{S}_{s, m, n}$ is an $s$-dimensional subspace of $\mathbb{R}^{mn}$, the $mn$-dimensional Euclidean space. In other words, we identify each row space with a point in the Grassmannian manifold $G(s, mn)$ and measure the angle $\alpha (\mtx{P}_s, \mtx{C})$ between the row spaces of  $\mtx{P}_s$ and $\mtx{C}$. There are various ways to do so. We follow~\cite{conway1996packing} and use the chordal distance for this purpose. Besides its geometric definition, the chordal distance between the row space of $\mtx{P}_s$ and that of $\mtx{C}$ can be computed via
\begin{equation*}
\alpha (\mtx{P}_s, \mtx{C}) = \frac{1}{2} \| \mtx{P}_s^* \mtx{P}_s - \mtx{C}^* \mtx{C} \|_F.
\end{equation*}

Hence our goal is to find the sub-block-circulant matrix $\mtx{C} \in \mathcal{S}_{s, m, n}$ that minimizes its chordal distance to $\mtx{P}_s$.
\begin{equation}\label{originaloptim}
\min \,\, \vct{\alpha}(\mtx{P}_s, \mtx{C}) \qquad 
\text{s.t.} \,\, \mtx{C} \in \mathcal{S}_{s, m, n}
\end{equation}
This {\em structured matrix nearness problem} does not have an easy solution. Therefore we consider a ``nearby'' problem that does have a  nice explicit solution. We compute
\begin{align}
\begin{split}
\alpha (\mtx{P}_s, \mtx{C}) 
&= \frac{1}{2} \| \mtx{P}_s^* \mtx{P}_s - \mtx{C}^* \mtx{C} \|_F \\
&= \frac{1}{2} \| \mtx{P}_s^* (\mtx{P}_s - \mtx{C}) - (\mtx{P}_s^* - \mtx{C}^*) \mtx{C} \|_F \\
&\leq \frac{1}{2} (\| \mtx{P}_s \|_F + \| \mtx{C} \|_F) \| \mtx{P}_s - \mtx{C} \|_F \\
&= s \| \mtx{P}_s - \mtx{C} \|_F
\end{split}
\end{align}
We used the fact that both $\mtx{P}_s$ and elements of $\mathcal{S}_{s, m, n}$ have unit vector as rows. The above estimation shows that even though $s \| \mtx{P}_s - \mtx{C} \|_F$ does not determine the optimal chordal distance between two row spaces, it is always an upper bound of the chordal distance. Inspired by this estimation, instead of the original optimization problem~\eqref{originaloptim}, we consider the following problem:
\begin{equation} \label{eq:main}
\min \,\, \| \mtx{P}_s - \mtx{C} \|_F \quad
\text{s.t.} \,\, \mtx{C} \in \mathcal{S}_{s, m, n}
\end{equation}
Note that $s$ here is a constant and is determined by the compression rate.
This optimization problem can be solved within a more general setting, namely in the context of complete commuting family of unitary matrices.

To solve \eqref{eq:main}, we consider the more general problem
\begin{equation} \label{eq:opt}
\min_{\mtx{C} \in \mathcal{A}} \,\, \| \mtx{W} - \mtx{C} \|_F,
\end{equation}
where $\mtx{W}$ is a given matrix of dimension $n \times n$ and $\mathcal{A}$ is a commuting family of unitary matrices. The exact definition of $\mathcal{A}$ is given in the appendix.
More precisely, we assume $\mathcal{A}$ is a complete commuting family of unitary matrices and $\mtx{U}$ is a unitary matrix that diagonalizes elements of $\mathcal{A}$. Note that unitary transformation preserves the Frobenius norm since for any $\mtx{X}$,
\begin{equation*}
\| \mtx{U}^* \mtx{X} \mtx{U} \|^2_F 
= \trace (\mtx{U}^* \mtx{X} \mtx{U} (\mtx{U}^* \mtx{X} \mtx{U})^*)
= \trace (\mtx{X} \mtx{U} \mtx{U}^* \mtx{X}^* \mtx{U} \mtx{U}^*)
= \trace (\mtx{X} \mtx{X}^*)
= \| \mtx{X} \|^2_F
\end{equation*}
As a result, the objective function is then
\begin{equation*}
\| \mtx{W} - \mtx{C} \|_F = \| \mtx{U}^* \mtx{W} \mtx{U} - \mtx{U}^* \mtx{C} \mtx{U} \|_F,
\end{equation*}
which can be decomposed into its diagonal part and its off-diagonal part. Since the off-diagonal part is independent of $\mtx{C}$, in order to minimize $\| \mtx{W} - \mtx{C} \|_F$, we only need to minimize the diagonal part. Let $\diag(\vct{c}) = \mtx{U}^* \mtx{C} \mtx{U}$. By definition, $\vct{c} = \theta_U^{-1} (\mtx{C}) \in T^n$. Hence, minimization of the diagonal part becomes
\begin{equation*}
\min_{\vct{c} \in T^n} \| \diag(\mtx{U}^* \mtx{W} \mtx{U}) - \vct{c} \|,
\end{equation*}
where the norm is simply the Euclidean norm in $\mathbb{C}^n$. The minimizer may not be unique, but one  minimizer $\vct{c}_0$ is given by
\begin{align} \label{eq:normalized-vct}
\begin{cases}
(\vct{c}_o)_i = 1, \quad &y_i = 0, \\
(\vct{c}_o)_i = y_i / |y_i|, \quad &\mbox{otherwise},
\end{cases}
\end{align}
where $y_i = (\mtx{U}^* \mtx{W} \mtx{U})_{ii}$. Applying the parametrization map, we get
\begin{equation} \label{eq:opt-sol}
\mtx{C}_o = \theta_U(\vct{c}_o) = \mtx{U} \diag^{-1}(\vct{c}_o) \mtx{U}^* \in \mathcal{A},
\end{equation}
due to the completeness of $\mathcal{A}$, and $\mtx{C}_o$ is a minimizer of \eqref{eq:opt}. Thus, we have proved the following theorem.
\begin{theorem} \label{thm:sol}
For any complete commuting family of unitary matrices $\mathcal{A}$, the optimization problem \eqref{eq:opt} has a solution given by \eqref{eq:opt-sol}.
\end{theorem}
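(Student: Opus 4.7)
The plan is to leverage the simultaneous diagonalizability afforded by the hypothesis that $\mathcal{A}$ is a complete commuting family of unitary matrices. Since every element of $\mathcal{A}$ is diagonalized by the same unitary $\mtx{U}$, I would change variables by setting $\widetilde{\mtx{W}} = \mtx{U}^* \mtx{W} \mtx{U}$ and $\widetilde{\mtx{C}} = \mtx{U}^* \mtx{C} \mtx{U}$, where the latter is guaranteed to be diagonal. By the unitary invariance of the Frobenius norm already recorded in the excerpt, the objective becomes $\|\widetilde{\mtx{W}} - \widetilde{\mtx{C}}\|_F$, with the optimization variable now constrained to the set of diagonal matrices of the form $\diag(\vct{c})$ for $\vct{c} \in T^n$.

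Next, I would expand the squared Frobenius norm as a sum over all entries and split it into a diagonal contribution ($i=j$) and an off-diagonal contribution ($i\neq j$). Because $\widetilde{\mtx{C}}$ is supported on the diagonal, the off-diagonal part collapses to $\sum_{i\neq j}|\widetilde{W}_{ij}|^2$, which is a constant independent of $\mtx{C}$. The problem therefore reduces to minimizing $\sum_i |\widetilde{W}_{ii} - c_i|^2$ over $\vct{c} = (c_i) \in T^n$, a separable problem in the coordinates $c_i$, each constrained to the unit circle $S^1 \subset \mathbb{C}$.

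The one-dimensional subproblem $\min_{z\in S^1} |y - z|$ is solved geometrically by the radial projection onto $S^1$: $z = y/|y|$ when $y\neq 0$, and any unit-modulus number (say $z=1$) when $y=0$. This gives exactly the vector $\vct{c}_o$ described in \eqref{eq:normalized-vct}. Assembling the coordinates and pushing forward through the parametrization yields the candidate $\mtx{C}_o = \theta_U(\vct{c}_o)$, which by construction attains the reduced minimum and hence the original minimum of \eqref{eq:opt}.

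The step that requires genuine care, and which I expect to be the main obstacle, is verifying that this $\mtx{C}_o$ actually lies in $\mathcal{A}$ rather than merely in the larger algebra of matrices commuting with $\mathcal{A}$. This is precisely where the completeness hypothesis is used: the parametrization map $\theta_U: T^n \to \mathcal{A}$ is surjective onto $\mathcal{A}$, so every $\vct{c}_o \in T^n$ corresponds to a bona fide element of $\mathcal{A}$. Without completeness, the minimizer of the reduced problem might fail to be realizable within $\mathcal{A}$; with it, the identification is immediate and the proof concludes.
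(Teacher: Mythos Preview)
Your proposal is correct and follows essentially the same approach as the paper: conjugate by the common diagonalizing unitary $\mtx{U}$, use unitary invariance of the Frobenius norm, split into diagonal and off-diagonal parts, minimize the diagonal part coordinatewise over $T^n$ to obtain $\vct{c}_o$, and invoke completeness to ensure $\mtx{C}_o = \theta_U(\vct{c}_o) \in \mathcal{A}$. Your write-up is slightly more explicit about the geometric radial-projection interpretation and about isolating completeness as the key feasibility step, but the argument is the same.
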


We can readily show that the family of BCCB matrices, denoted by $\mathcal{N}_{m,n}$, is a special case.
\begin{corollary}\label{cor:bccb}
The collection of all unitary block circulant matrices with circulant blocks $\mathcal{N}_{m,n}$ is a complete commuting family of unitary matrices, and a solution to the optimization problem \eqref{eq:opt} is given by
\begin{equation} \label{eq:opt-sol-bccb}
\mtx{C}_o = \theta_U(\vct{c}_o) = \mtx{F}_{m,n} \diag^{-1}(\vct{c}_o) \mtx{F}_{m,n}^* \in \mathcal{A},
\end{equation}
where $\vct{c}_o$ is determined by \eqref{eq:normalized-vct}.
Moreover, if $\mtx{W}$ in \eqref{eq:opt} is a real matrix, then $\mtx{C}_o$ given by \eqref{eq:opt-sol-bccb} is also a real matrix.
\end{corollary}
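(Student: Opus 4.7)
The plan is to reduce the corollary to Theorem~\ref{thm:sol} by exhibiting an explicit simultaneous diagonalizer of $\mathcal{N}_{m,n}$, and then to handle the real-matrix statement separately by exploiting the Hermitian symmetry of the 2D DFT of a real signal. So the proof breaks into three pieces: (i) show that $\mathcal{N}_{m,n}$ is a complete commuting family with diagonalizing unitary $\mtx{U} = \mtx{F}_{m,n}$, (ii) invoke Theorem~\ref{thm:sol} to get the formula \eqref{eq:opt-sol-bccb}, and (iii) verify that when $\mtx{W}$ is real, the minimizer produced by \eqref{eq:normalized-vct} automatically has conjugate-symmetric entries, forcing $\mtx{C}_o$ to be real.

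For step (i), I would use the classical fact that any circulant matrix of size $n$ is diagonalized by the $n$-point DFT matrix $\mtx{F}_n$, together with the Kronecker-product identity $\mtx{F}_{m,n} = \mtx{F}_m \otimes \mtx{F}_n$ and the fact that a BCCB matrix has the form $\sum_{k} \mtx{A}_k \otimes \mtx{B}_k$ where $\mtx{A}_k$ and $\mtx{B}_k$ are circulant; this expresses every BCCB matrix as $\mtx{F}_{m,n}\,\mtx{D}\,\mtx{F}_{m,n}^*$ for some diagonal $\mtx{D}$. Commutativity is then automatic since any two matrices simultaneously diagonalized by the same unitary commute. Unitarity of $\mtx{C} \in \mathcal{N}_{m,n}$ is equivalent to $\mtx{F}_{m,n}^* \mtx{C}\, \mtx{F}_{m,n}$ having entries on the unit circle, so $\theta_U^{-1}(\mtx{C}) \in T^{mn}$; conversely, for every $\vct{c} \in T^{mn}$, reversing the diagonalization produces a unitary BCCB matrix. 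This bijection is exactly the completeness condition required by Theorem~\ref{thm:sol}, so step (ii) is an immediate specialization with $\mtx{U} = \mtx{F}_{m,n}$.

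The main obstacle, and the only place where anything beyond a direct quotation is needed, is step (iii). The key observation is that if $\mtx{W}$ is real then the diagonal entries $y_i = (\mtx{F}_{m,n}^* \mtx{W}\, \mtx{F}_{m,n})_{ii}$ inherit the standard Hermitian symmetry of the 2D DFT of a real array: indexing by $(k,\ell) \in \mathbb{Z}_m \times \mathbb{Z}_n$, one has $y_{(-k,-\ell)} = \overline{y_{(k,\ell)}}$, where the indices are read modulo $m$ and $n$. I would then check that the normalization rule \eqref{eq:normalized-vct} preserves this symmetry entry by entry (zero maps to one, and $y/|y|$ commutes with complex conjugation), giving $(\vct{c}_o)_{(-k,-\ell)} = \overline{(\vct{c}_o)_{(k,\ell)}}$. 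Finally, an element of $\mathcal{N}_{m,n}$ is real exactly when its DFT symbol is Hermitian-symmetric in this sense, since the first column of a BCCB matrix is the inverse DFT of its eigenvalues, and the inverse DFT of a Hermitian-symmetric sequence is real. Therefore $\mtx{C}_o = \mtx{F}_{m,n}\,\diag^{-1}(\vct{c}_o)\,\mtx{F}_{m,n}^*$ is real, completing the proof.
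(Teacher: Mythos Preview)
Your proposal is correct, and for steps (i)--(ii) it matches the paper's proof essentially verbatim: both verify that $\mathcal{N}_{m,n}$ is a complete commuting family via the 2D DFT diagonalization of BCCB matrices (the paper quotes this as Lemma~\ref{lem:2d-dft}; you rebuild it from the Kronecker factorization $\mtx{F}_{m,n}=\mtx{F}_m\otimes\mtx{F}_n$) and then specialize Theorem~\ref{thm:sol} with $\mtx{U}=\mtx{F}_{m,n}$.

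For the ``moreover'' clause the two arguments diverge. The paper decomposes a BCCB matrix as $\sum_i \mtx{E}_i\otimes\mtx{C}_i$ with $\mtx{E}_i=\circulant(\vct{e}_i)$, invokes the 1D fact that a circulant matrix is real iff its eigenvalue vector is conjugate-symmetric, and then reassembles via the tensor product. You instead work directly with the 2D index: from the column identity $\overline{\vct{u}_{(k,\ell)}}=\vct{u}_{(-k,-\ell)}$ for $\mtx{F}_{m,n}$ and the realness of $\mtx{W}$ you get $y_{(-k,-\ell)}=\overline{y_{(k,\ell)}}$ for the diagonal of $\mtx{F}_{m,n}^*\mtx{W}\mtx{F}_{m,n}$, observe that the normalization \eqref{eq:normalized-vct} respects this symmetry entrywise, and conclude that the first column of $\mtx{C}_o$ (the inverse 2D DFT of $\vct{c}_o$) is real. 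Your route is shorter and applies transparently to an arbitrary real $\mtx{W}$, whereas the paper's tensor decomposition buys a reduction to the textbook 1D statement at the price of tracking the normalization through several Kronecker factors. One small wording point: the phrase ``Hermitian symmetry of the 2D DFT of a real array'' is slightly loose, since $y_{(k,\ell)}$ is a quadratic form $\vct{u}_{(k,\ell)}^*\mtx{W}\,\vct{u}_{(k,\ell)}$ rather than a DFT coefficient of a single array; the clean justification is exactly that column identity, which you should make explicit.
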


See the appendix for the proof.


\begin{definition}[Downsampling Operator]
For any $1 \leq k \leq n$, a downsampling operator is 1-1 map $\psi$ from $\mathbb{Z}_k$ to $\mathbb{Z}_n$, where $\mathbb{Z}_p = \{ 1, 2, \dots, p \}$.
\end{definition}
The downsampling operator $\psi$ chooses $k$ elements from $\mathbb{Z}_n$ in a certain order without replacement. We will use $\psi$ to sample the rows of a matrix from $\mathcal{N}_{m,n}$.

\begin{definition}[Subsampled Unitary BCCB]
The Subsampled Unitary BCCB $\mtx{B}$ formed by downsampling $\mtx{C} \in \mathcal{N}_{m,n}$ via the downsampling operator $\psi: \mathbb{Z}_s \rightarrow \mathbb{Z}_{mn}$ is an $s \times mn$ matrix given by
\begin{equation*}
(\mtx{B})_{i, j} = (\mtx{C})_{\psi (i), j}, \quad \forall i \in \mathbb{Z}_s, j \in \mathbb{Z}_{mn}.
\end{equation*}
We denote $\psi(\mtx{C}) = \mtx{B}$. The collection of all such matrices is denoted $\mathcal{S}_{s, m, n; \psi}$, which we will simply refer to as $\mathcal{S}_{s, m, n}$.
\end{definition}

A subsampled unitary BCCB is formed by certain rows of some unitary block circulant matrix with circulant blocks whose indices are determined by a fixed downsampling operator.


Consider a downsampling operator $\psi: \mathbb{Z}_{s} \rightarrow \mathbb{Z}_{mn}$. Let $\mathcal{A}$ be a complete commuting family of unitary matrices. If we subsample every element of $\mathcal{A}$ using $\psi$, we end up getting a set of subsampled unitary matrices,
\begin{equation*}
\mathcal{B} = \{ \psi(\mtx{C}): \mtx{C} \in \mathcal{A} \}.
\end{equation*}
Now, consider the following optimization problem,
\begin{equation} \label{opt-sub}
\min_{\mtx{B} \in \mathcal{B}} \quad \| \mtx{W} - \mtx{B} \|_F,
\end{equation}
where $\mtx{W}$ is a given matrix with dimension of $s \times mn$. Define the \textit{zero-padding map} $\rho_\psi: \mathbb{C}^{s \times mn} \rightarrow \mathbb{C}^{mn \times mn}$ as  follows:
\begin{align*}
( \rho_\psi (\mtx{W}) )_{i,j} =
\begin{cases}
w_{i,j}, \quad &i \in \image \psi, \\
0, \quad &i \notin \image \psi,
\end{cases}
\end{align*}
for any $i, j \in \mathbb{Z}_{mn}$. Observe that any solution $\mtx{C}_0$ to
\begin{align*}
\min_{\mtx{C} \in \mathcal{A}} \| \rho(\mtx{W}) - \mtx{C} \|_F
\end{align*}
gives a solution $\psi(\mtx{C}_0)$ to \eqref{opt-sub}. This is because for any $\mtx{C} \in \mathcal{A}$,
\begin{align*}
\| \rho(\mtx{W}) - \mtx{C} \|_F^2 
&= \| \mtx{W} - \psi(\mtx{C}) \|^2 + \sum_{i \notin \image \psi} \| \vct{c}_i \|^2 \\
&= \| \mtx{W} - \psi(\mtx{C}) \|^2 + mn - k,
\end{align*}
where $\vct{c}_i$ is the $i$-th row of $\mtx{C}$ and is a unit vector.

\begin{theorem} \label{thm:sol-sub}
If $\mathcal{B}$ is a family of subsampled commuting unitary matrices, then an optimization problem of the form \eqref{opt-sub} can be converted into a problem of commuting family of unitary matrices of the form \eqref{eq:opt} using the zero-padding operator.
\end{theorem}

Downsampled from $\mathcal{N}_{m,n}$ by some downsampling operator $\psi$, the collection of subsampled unitary BCCB $\mathcal{S}_{s, m, n}$ is obviously a special case since $\mathcal{S}_{s, m, n} = \psi (\mathcal{N}_{m,n})$. Therefore, the optimization problem 
\begin{align*}
\begin{split}
\min \quad& \| \mtx{P}_s - \mtx{C} \|_F \\
\text{s.t.} \quad& \mtx{C} \in \mathcal{S}_{s, m, n}
\end{split}
\end{align*}
can be solved by following Theorem~\ref{thm:sol-sub} and Theorem~\ref{thm:sol}.
The procedure of solving this problem is summarized in Algorithm~\ref{alg:main}.

\begin{algorithm} \label{alg:main}
\SetKwInOut{Input}{Input}
\SetKwInOut{Output}{Output}

\Input{Data set $\mathcal{D} \subseteq \mathbb{R}^{m \times n}$, dimension $s$ of the compressed data, downsampling operator $\psi$}
\vspace{2pt}
\Output{Compression matrix with structure $\psi (\mtx{C}_0)$}
\vspace{6pt}

Find the first $s$ PCA-components of $\mathcal{D}$ and use them as rows to form an $s \times mn$ matrix $\mtx{P}_s$ \\
\vspace{4pt}
Form an $mn \times mn$ matrix $\rho_\psi (\mtx{P}_s)$ using the zero-padding operator $\rho_\psi$ \\
\vspace{4pt}
Apply the 2D DFT matrix $\mtx{F}_{m,n}$ and extract the diagonal
\begin{equation*}
\vct{c} = \diag (\mtx{F}_{m,n}^* \rho_\psi (\mtx{P}_s) \mtx{F}_{m,n})
\end{equation*} \\
Form a new vector $\vct{c}_0$ by normalizing components of $\vct{c}$. Replace by $1$ if the entry is $0$ \\
\vspace{4pt}
Form a diagonal matrix and apply the 2D DFT matrix. The result is a unitary block circulant matrix with circulant blocks
\begin{equation*}
\mtx{C}_0 = \mtx{F}_{m,n} \diag^{-1} (\vct{c}_0) \mtx{F}_{m,n}^*.
\end{equation*}\\
Subsample the matrix and get $\psi (\mtx{C}_0)$. Use $\psi (\mtx{C}_0)$ to compress the data.
\caption{Construction of deep learning compression matrix}
\end{algorithm}

\subsection{Construction of convolution projection via joint non-convex optimization}
\label{ss:PNN}


In this section we construct a projection matrix by setting up a joint optimization problem that involves both solving the image classification and the finding the optimal projection matrix. To solve this non-convex optimization problem we employ deep learning. More precisely, we construct a CNN to which we add an additional layer as zeroth layer. This zero-th layer will consist of one 2-D convolution  followed by downsampling. The overall architecture is illustrated in Figure~\ref{fig:pnn}.

\begin{figure}
\centering
\includegraphics[width=0.55\textwidth]{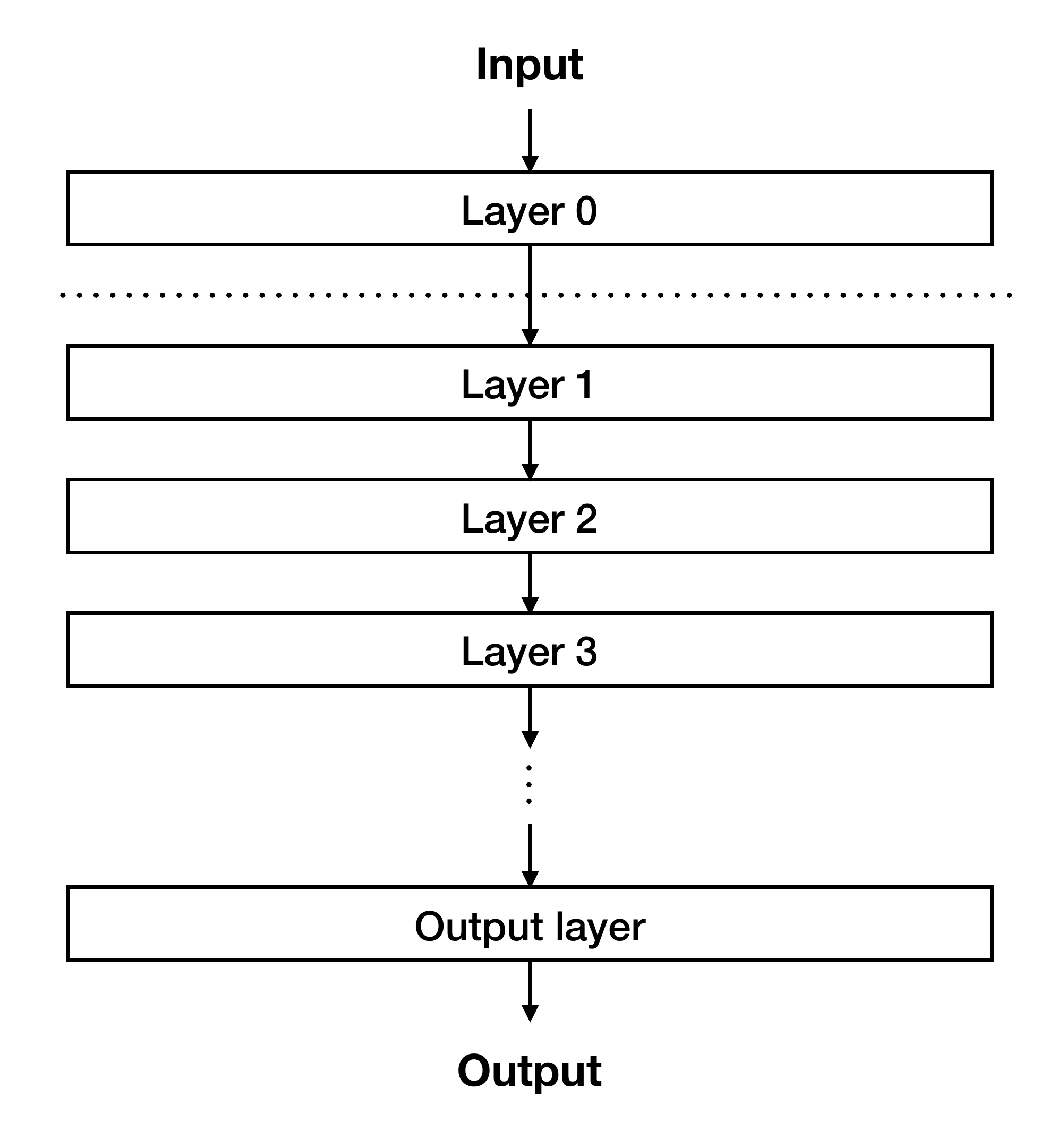}
\caption{The architecture of a CNN with an additional layer. Layers below the zero-th layer form a CNN themselves. The zero-th layer is a convolutional layer with only one filter and a stride greater than one. Thus, the zero-th layer applies a single convolution filter on the input images and then downsamples the results. The output of the zero-th layer is then passed to the next layer. All layers will be trained jointly on a training set.}
\label{fig:pnn}
\end{figure}


\section{Numerical experiments}
To demonstrate the capabilities of our methods, we test both the PNN (name may be changed) and the circulant approximation against a few other baseline methods. We test these methods on two standard test sets,  the MNIST dataset consisting of images depicting  hand-written digits, and the Fashion-MNIST dataset, consisting of  images of fashion products. The general workflow consists of two steps. In the preprocssing step we  subject the images to the projection operator to simulate the compressive image acquistion via hardware. In the second step we feed these images into a convolutional neural network for classification. We will describe the preprocessing methods, the architecture of the network and training details in the rest of the section.

\subsection{Preprocessing step: Projection operator}
In order to put the PNN and the circulant approximation method in context, we conducted a few other methods in the experiments and compare their results against each other. The list below is a brief summary of all these results, and Table~\ref{table:stride} gives the relation between the stride and the compression rate.

\begin{enumerate}
\item
{\bf Downsampling}: Downsample the images using a certain stride. When the stride is equal to $1$, the original dataset is used. Downsampling, i.e.\ just taking low-resolution images, is the easiest (and least sophisticated) way to reduce dimensionality of the images. 
\item
{\bf Random Convolution}: A random convolutional filter of size $5 \times 5$ is generated and then applied to a few locations in the image, determined by the stride. The size of the resulted image is determined by the stride of the convolution applied to the image. The image size is unchanged when the stride is equal to $1$. There is no constraints on the filter size, but we fixed the filter size in our experiments for the sake of simplicity.
Projection matrices of this type have been proposed in the context of compressive sensing (e.g.\ see~\cite{romberg2009compressive}).
\item
{\bf PCA}: Compute the PCA components for the entire training set. Given the compression rate and a raw image, the compressed data is the coefficients of a few leading PCA component for the image. When uncompressed (the compression rate is equal to 1), all coefficients are used.
\item
{\bf Circulant Approximation}: This is the construction outlined in Subsection~\ref{ss:circapprox}. First compute a matrix with structure that is most similar to the PCA matrix of the training set and then compress images using this matrix. The dimension of the matrix with structure is determined by the compression rate. It is a square matrix if the data is uncompressed.
\item
{\bf PNN}:  This is the construction presented in Subsection~\ref{ss:PNN}. Add a convolutional layer with a certain stride and one single feature map on the top of the architecture. Train the resulting network on the original dataset. The first convolutional layer serves as a compressor, and it is optimized along with the rest of the network.
\end{enumerate}

\begin{table}[h]
\begin{center}
\begin{tabular}{c|c|c}
Stride & Dimension & Compression \\ \hline
1        &    $28 \times 28$    &  1.00  \\ \hline
2        &    $14 \times 14$    &  4.00  \\ \hline
3        &    $10 \times 10$    &  7.84  \\ \hline
4        &     $7  \times  7$     &  16.00  \\ \hline
5        &     $6  \times  6$     &  21.78  \\ \hline
6        &     $5  \times  5$     &  31.36
\end{tabular}
\end{center}
\caption{Relation between the stride and the compression rate. The dimension of the raw inputs is $28 \times 28$. After applying one of the preprocessing method with a certain stride, the dimension of the data becomes smaller. The compression is the ratio between the number of pixels in the the raw data and that of the processed data.}
\label{table:stride}
\end{table}

\subsection{Architecture and training}
The neural network for the raw data has the following architecture. The first weighted layer is a convolution layer with $32$  filters of the size $5 \times 5$ and followed by ReLU nonlinearity and a maxpooling layer with stride $2$. The second weighted layer is the same as the first one except that it has $64$ filters, also followed by ReLU and a maxpooling layer with stride $2$. The third weighted layer is a fully connected layer with $256$ units and followed by a dropout layer. The last layer is a softmax layer with $10$ channels, corresponding to the $10$ classes in the dataset. This network has 857,738 weights in total. Architectures of neural networks dealing with compressed data are listed in Table~\ref{tab:architecture}.

The table also lists the total number of weights and the number of floating point operations for each forward pass. In general, both numbers are decreased when we use smaller inputs. Since we are using very small images in the first place, some networks have two pooling layers and some do not have any. This explains why some networks with smaller inputs have more weights and flops than networks with larger inputs. For larger images, we will be able to apply same number of pooling operations both for the raw data and the compressed data. In this way, the decrease in the number of weights and flops will be even more significant.

\begin{table}
\centering
\begin{tabular}{|l|c|c|c|c|c|c|}
\hline
Stride & 1 & 2 & 3 & 4 & 5 & 6 \\ \hline
Input & $28 \times 28$ & $14 \times 14$ & $10 \times 10$ &  $7  \times  7$ & $6  \times  6$ & $5  \times  5$ \\ \hline
First convolution & \multicolumn{6}{|c|}{conv32} \\ \hline
First pooling & \multicolumn{3}{|c|}{maxpool2} &  \multicolumn{3}{|c|}{-} \\ \hline
Second convolution & \multicolumn{6}{|c|}{conv32} \\ \hline
Second pooling & maxpool2 & \multicolumn{5}{|c|}{-} \\ \hline
Fully connected & \multicolumn{6}{|c|}{FC256} \\ \hline
Output & \multicolumn{6}{|c|}{softmax10} \\ \hline
Weights & 857,738 & 857,738 & 464,522 & 857,738 & 644,746 & 464,522 \\ \hline
MegaFlops-sec & 22.93 & 6.94 & 3.54 & 6.70 & 4.92 & 3.42 \\ \hline
\end{tabular}
\caption{Architectures for different input sizes. conv32 means a convolutional layer with 32 filters. All filters have size $5 \times 5$. maxpool2 means a maxpooling layer with stride $2 \times 2$. FC256 represents a fully connected layer with 256 units and softmax10 is the 10-way softmax layer. The second from the last row are the total numbers of weights in these networks.  The last row lists the number of floating point operations for each forward pass. The softmax layer is not included since the cost of the exponential function is hard to estimate. Nonetheless, the last layers of all these networks are the same. Refer to Figure~\ref{fig:results} for the performance of these network architectures.}
\label{tab:architecture}
\end{table}

For PNN, it starts with a convolutional layer with a single filter of the size $5 \times 5$ and a certain stride. The output is then fed into the network described above.

We trained the networks using ADAM. We also did a grid search for hyper-parameters. In general, we found that with learning rate $= 0.01$, dropout rate $=0.4$ generated the best results. Each network was trained for $10$ epochs with mini-batches of the size $32$.

\subsection{Results}
As shown in Figure~\ref{fig:results}, for both datasets, it is evident that both PNN and the circulant approximation achieve higher accuracy rate than downsampling and random convolution, especially when the data are heavily compressed. In most cases, the PNN method works slightly better than the circulant approximation. However, the circulant approximation method exhibits its ability to retain high accuracy when pushing to more extreme compression rate on MNIST. Since PNN is a relaxation of the circulant approximation method, the global optimum of the former is always no worse than the latter. What we observe in MNIST is a result of the training process, which has no guarantee for global optimum of the neural network.

\begin{figure}
\centering
\includegraphics[width=0.95\textwidth]{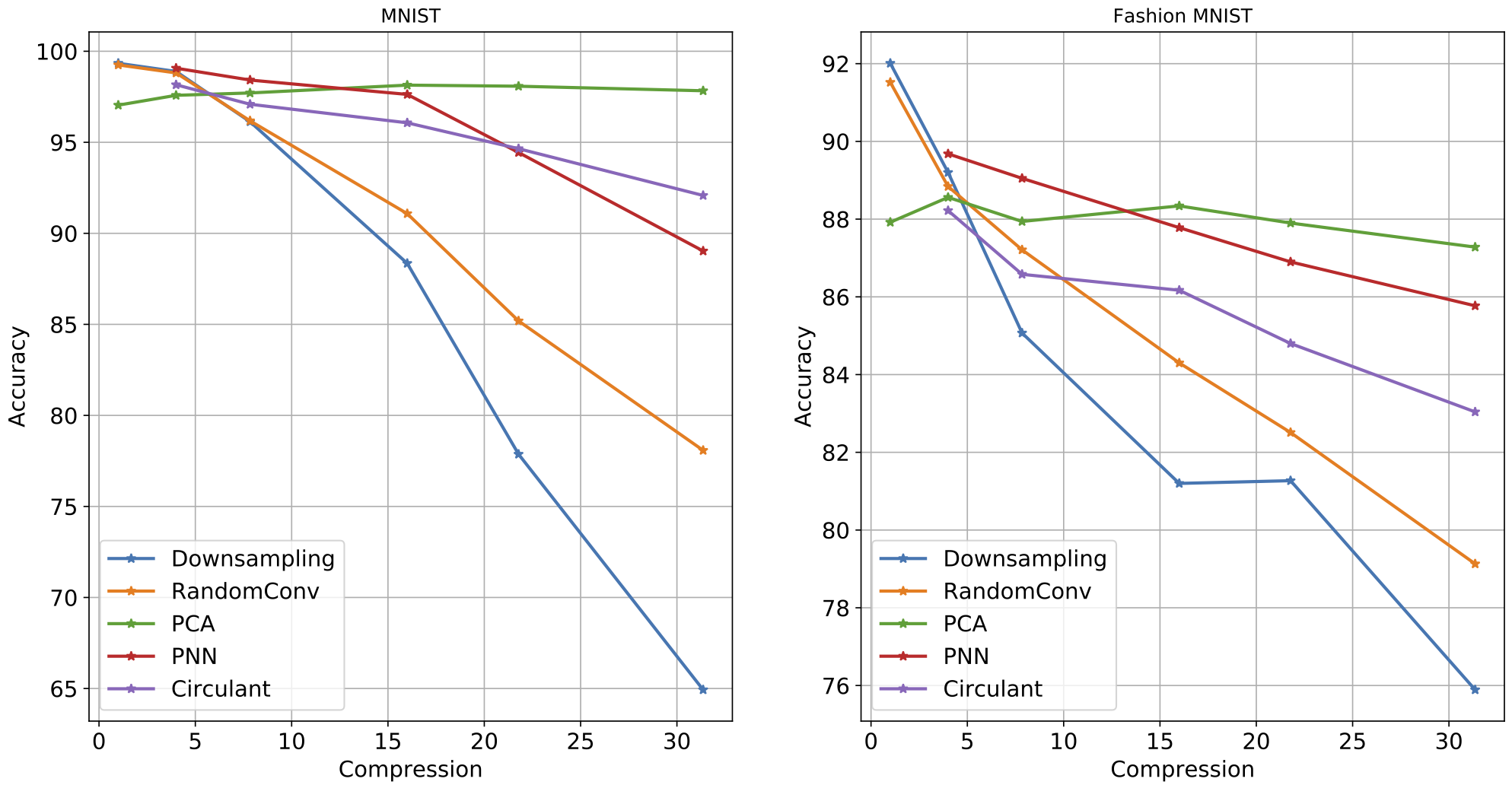}
\caption{Accuracy rates of compressive deep learning with various choices for the projection matrix, tested on MNIST and on Fashion-MNIST. The $x$-axis represents the compression rate of the input (this is determined by the strides). The relation between the strides and the compression is given in Table~\ref{table:stride}.}
\label{fig:results}
\end{figure}

Another interesting result is that the PCA method seems not to be affected by extreme compression rates but rather benefits from them. This is probably because only the coefficients of the leading PCA components have high signal-to-noise ratio, and the rest are mostly noise. Therefore, the PCA method performs better simply by discarding the noisy coefficients. For the purpose of our work, the PCA method cannot be compared with the other methods directly since {\it the PCA matrix is not a convolution and cannot be implemented by hardware}.

\subsection{Reducing the number of filters}
As we have seen in Table~\ref{tab:architecture}, because of the small image size, some networks with smaller inputs have more weights and flops than networks with larger inputs. To further reduce the number of weights and flops for networks with small inputs, we explore the compressibility of these networks by reducing the number of filters. For the network for the case of stride $6$, the network given in Table~\ref{tab:architecture} has 32 filters in the first convolutional layer and 64 filters in the second one. Here, we consider using networks with much less filters. The architectures of these networks are listed below. The network with index $1$ is the same as the network with stride $6$ in Table~\ref{tab:architecture}.

\begin{table}
\centering
\begin{tabular}{|l|c|c|c|c|c|}
\hline
Index & 1 & 2 & 3 & 4 & 5 \\ \hline
Input & \multicolumn{5}{|c|}{$5  \times  5$} \\ \hline
First convolution & conv32 & conv16 & conv8 & conv4 & conv2 \\ \hline
Second convolution & conv64 & conv32 & conv16 & conv8 & conv4 \\ \hline
Fully connected & \multicolumn{5}{|c|}{FC256} \\ \hline
Output & \multicolumn{5}{|c|}{softmax10} \\ \hline
Weights & 464,522 & 220,874 & 108,650 & 54,938 & 28,682 \\ \hline
MegaFlops-sec & 3.42 & 1.07 & 0.37 & 0.15 & 0.06 \\ \hline
\end{tabular}
\caption{Neural networks architectures with fewer filters in their convolutional layers. The number of weights and consequently also the number of flops are reduced.}
\label{tab:smaller}
\end{table}

\begin{figure}
\centering
\includegraphics[width=0.55\textwidth]{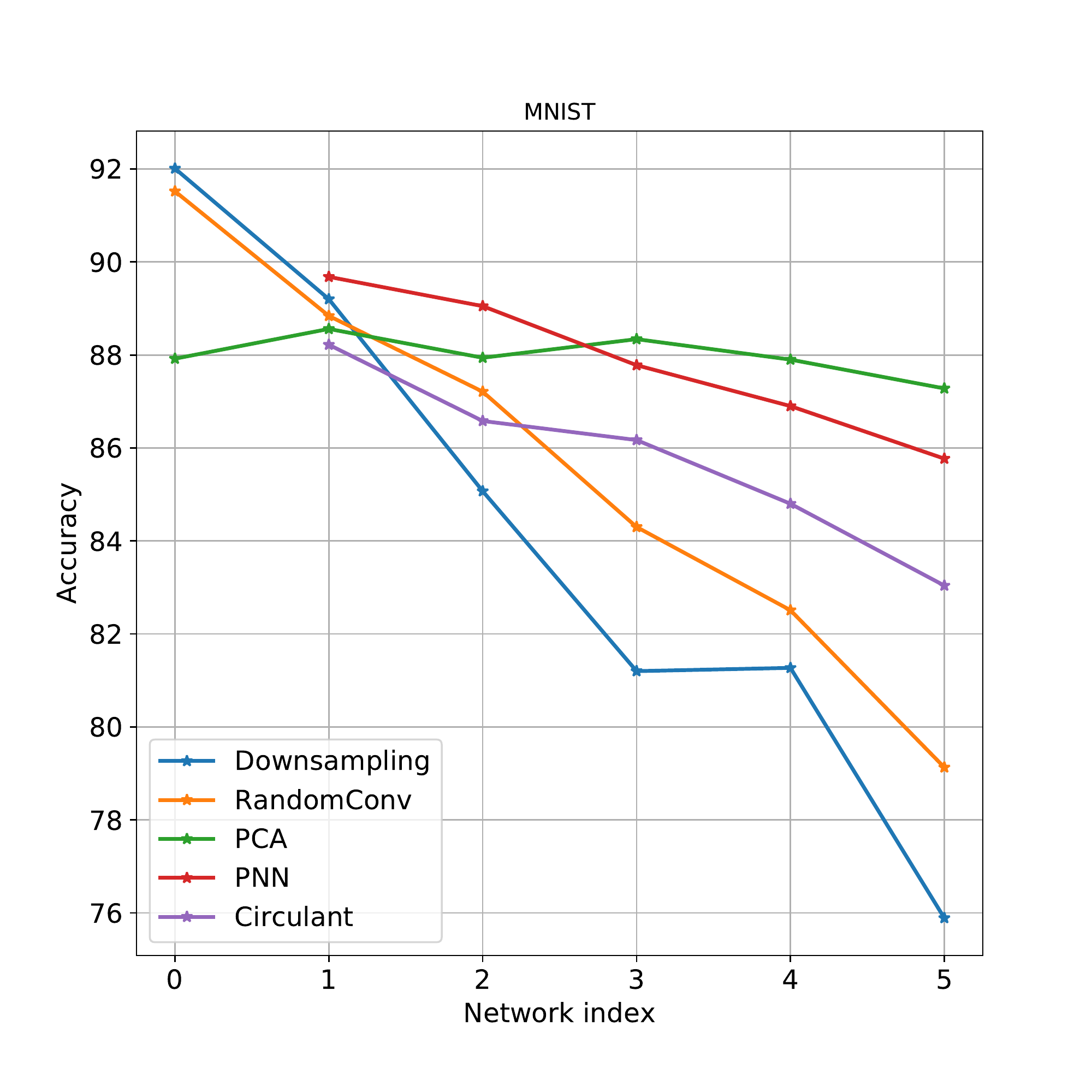}
\caption{Accuracy rates of compressive deep learning with various choices for the projection matrix, tested on MNIST. The $x$-axis represents different network architectures defined in Table~\ref{tab:smaller}. Networks with larger indices have less weights and smaller number of flops.}
\label{fig:smaller}
\end{figure}

The results of these neural networks for MNIST are shown in Figure~\ref{fig:smaller}. Both the PNN and the circulant approximation have relatively high accuracy rates even with very small number of filters.

\section{Appendix}


\subsection{Matrices with circulant strcuture}
Circulant matrices can be thought as discrete convolutions applicable to one dimensional signals. They have the nice property that circulant matrices can be diagonalized by the Discrete Fourier Matrix. For two-dimensional signals, the corresponding matrix is a  block circulant matrix with circulant blocks (BCCB).
\begin{definition}[Block Circulant Matrix with Circulant Blocks]
An $mn \times mn$ block circulant matrix with circulant blocks $\mtx{C}$ has an $m \times m$ circulant block structure and each of its block $\mtx{C}_i \in \mathcal{A}_n$ is an $n \times n$ circulant matrix itself. The matrix $\mtx{C}$ can be written as
\begin{equation*}
\def\arraystretch{1.2}
\mtx{C} =
\begin{bmatrix}
&\mtx{C}_0 & \mtx{C}_{m-1} & \cdots & \mtx{C}_2 & \mtx{C}_1 &\\
&\mtx{C}_1 & \mtx{C}_0 & \mtx{C}_{m-1} &        & \mtx{C}_2 &\\
&\vdots & \mtx{C}_1 & \mtx{C}_0 & \ddots & \vdots &\\
&\mtx{C}_{m-2} &  & \ddots & \ddots & \mtx{C}_{m-1} &\\
&\mtx{C}_{m-1} & \mtx{C}_{m-2} & \cdots & \mtx{C}_1 & \mtx{C}_0 &
\end{bmatrix}.
\end{equation*}
\end{definition}

The lemma below from \cite{davis1979circulant} (Theorem~5.8.1) shows that BCCBs can be diagonalized by the 2D DFT.
\begin{lemma}[Diagonalization by 2D DFT] \label{lem:2d-dft}
Let $\mtx{F}_{m,n}$ be the two-dimensional unitary Discrete Fourier Transform matrix. Then, a matrix $\mtx{C}$ is a block circulant matrix with circulant blocks if and only if $\mtx{F}_{m,n}^* \mtx{C} \mtx{F}_{m,n}$ is a diagonal matrix. In particular, if $\mtx{C}$ is a block circulant matrix with circulant blocks and $\vct{c}$ is its first column, then $\mtx{C}$ can be diagonalized by $\mtx{F}_{m,n}$ as
\begin{equation*}
\mtx{F}_{m,n}^* \mtx{C} \mtx{F}_{m,n} = \sqrt{mn} \diag(\mtx{F}_{m,n} \vct{c}).
\end{equation*}
\end{lemma}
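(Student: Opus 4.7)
The plan is to reduce the lemma to the well-known one-dimensional fact that any circulant matrix is diagonalized by the 1D unitary DFT, exploiting the tensor-product factorization $\mtx{F}_{m,n} = \mtx{F}_m \otimes \mtx{F}_n$. The key algebraic step I would take first is to expand an arbitrary BCCB matrix $\mtx{C}$ in the basis of shift monomials,
\begin{equation*}
\mtx{C} = \sum_{i=0}^{m-1}\sum_{j=0}^{n-1} c_{ij}\,(\mtx{S}_m^i \otimes \mtx{S}_n^j),
\end{equation*}
where $\mtx{S}_k$ denotes the $k \times k$ cyclic shift and the scalars $c_{ij}$ are precisely the entries of the first column $\vct{c}$ of $\mtx{C}$ after reshaping into an $m \times n$ array. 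The verification is a direct unpacking of the definition: the inner-block circularity supplies the $j$-index and the outer block-circularity supplies the $i$-index.

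Given this representation, conjugation by $\mtx{F}_{m,n}$ combined with the mixed-product property $(A\otimes B)(C\otimes D) = (AC)\otimes(BD)$ produces
\begin{equation*}
\mtx{F}_{m,n}^* \mtx{C} \mtx{F}_{m,n} = \sum_{i,j} c_{ij}\,(\mtx{F}_m^*\mtx{S}_m^i\mtx{F}_m) \otimes (\mtx{F}_n^*\mtx{S}_n^j\mtx{F}_n).
\end{equation*}
Each $\mtx{F}_k^*\mtx{S}_k\mtx{F}_k$ is a diagonal matrix of roots of unity by the 1D result, so every summand is a Kronecker product of two diagonal matrices and hence itself diagonal, which proves one direction of the ``if and only if''. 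Reading off the diagonal entries then collapses the double sum into a finite exponential sum that is exactly the 2D DFT of the array $(c_{ij})$; tracking the unitary normalizations of $\mtx{F}_m$ and $\mtx{F}_n$ contributes the overall factor of $\sqrt{mn}$ and yields $\mtx{F}_{m,n}^*\mtx{C}\mtx{F}_{m,n} = \sqrt{mn}\,\diag(\mtx{F}_{m,n}\vct{c})$.

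For the converse direction, I would observe that the two block shifts $\mtx{S}_m \otimes \mtx{I}_n$ and $\mtx{I}_m \otimes \mtx{S}_n$ are themselves BCCB, hence by the forward direction just established, both are diagonalized by $\mtx{F}_{m,n}$. If $\mtx{F}_{m,n}^*\mtx{C}\mtx{F}_{m,n}$ is also diagonal, then $\mtx{C}$ commutes with both shifts (via the obvious ``diagonal matrices commute, so their $\mtx{F}_{m,n}$-conjugates do too'' argument), and a short entrywise check shows that any matrix commuting with both cyclic-shift generators must be BCCB. The only nontrivial obstacle I anticipate is pure bookkeeping: aligning the reshaping convention for $\vct{c}$ into the $m \times n$ array $(c_{ij})$ with the Kronecker-product index order, so that the axes implicit in $\mtx{F}_{m,n} = \mtx{F}_m \otimes \mtx{F}_n$ agree with the block structure prescribed in the statement. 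Once that convention is pinned down, the remainder is a componentwise application of the 1D diagonalization.
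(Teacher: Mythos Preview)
Your argument is correct and is the standard tensor-product reduction to the one-dimensional case; the shift-monomial expansion, the mixed-product property, and the commutation argument for the converse are all sound, and the normalization bookkeeping you flag is the only place one has to be careful. Note, however, that the paper does not supply its own proof of this lemma: it simply cites Theorem~5.8.1 of Davis's \emph{Circulant Matrices}~\cite{davis1979circulant}. Your sketch is essentially the proof one finds there (or in any equivalent treatment), so there is nothing to contrast---you have filled in what the paper deliberately outsourced.
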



\subsection{Complete commuting family of unitary matrices}

\begin{definition}[Commuting Family of Unitary Matrices]
$\mathcal{A}$ is called a commuting family of unitary matrices if $\mathcal{A}$ is consisted of unitary matrices of the same dimension that commute with each other.
\end{definition}

Properties of commuting families of unitary matrices are presented below. 

\begin{definition}[Parametrization Map]
Let $\mtx{U}$ be a unitary matrix, then the parametrization map $\theta_U$ induced by $\mtx{U}$ is defined by
\begin{equation*}
\theta_U: \mathbb{C}^n \rightarrow \mathbb{C}^{n \times n}, \vct{c} \rightarrow \mtx{U} \diag(\vct{c}) \mtx{U}^*.
\end{equation*}
\end{definition}

\begin{lemma}
Let $\mathcal{A}$ be a commuting family of unitary matrices. There exists a unitary matrix $\mtx{U}$ that diagonalizes any $\mtx{C} \in \mathcal{A}$. Furthermore, the preimage of $\mathcal{A}$ under $\theta_U$ is a subset of the $n$-dimensional torus $T^n$, where $T^n = S^1 \times S^1 \times \dots \times S^1 \subseteq \mathbb{C}^n$.
\end{lemma}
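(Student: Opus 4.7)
The plan is to prove the two assertions separately: simultaneous diagonalizability, and then the location of the preimage on the torus.

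For the first assertion, I would proceed by induction on the dimension $n$. The base case $n = 1$ is trivial. For the inductive step, pick any fixed element $\mtx{C}_0 \in \mathcal{A}$. Since $\mtx{C}_0$ is normal (being unitary), the spectral theorem yields an orthogonal decomposition $\mathbb{C}^n = E_{\lambda_1} \oplus \cdots \oplus E_{\lambda_k}$ into its eigenspaces. The key observation is that because every $\mtx{C} \in \mathcal{A}$ commutes with $\mtx{C}_0$, each eigenspace $E_{\lambda_j}$ is invariant under $\mtx{C}$: if $\mtx{C}_0 v = \lambda_j v$, then $\mtx{C}_0 (\mtx{C} v) = \mtx{C} \mtx{C}_0 v = \lambda_j (\mtx{C} v)$. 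Restricted to $E_{\lambda_j}$, the family $\{\mtx{C}|_{E_{\lambda_j}} : \mtx{C} \in \mathcal{A}\}$ remains a commuting family of unitary operators (unitarity is preserved because each $E_{\lambda_j}$ is an orthogonal summand and $\mtx{C}$ preserves the inner product). If $\mtx{C}_0$ has more than one distinct eigenvalue, each $E_{\lambda_j}$ has dimension strictly less than $n$, so the inductive hypothesis produces an orthonormal basis of $E_{\lambda_j}$ diagonalizing all restrictions; concatenating these bases into the columns of $\mtx{U}$ gives a single unitary matrix that simultaneously diagonalizes all of $\mathcal{A}$. If $\mtx{C}_0$ has only one eigenvalue, then $\mtx{C}_0$ is a scalar multiple of the identity, so we simply drop it and apply the argument to any other non-scalar element; if every element is scalar, any unitary $\mtx{U}$ works.

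For the second assertion, suppose $\mtx{U}$ is the common diagonalizer and let $\mtx{C} \in \mathcal{A}$ be arbitrary. Then $\mtx{U}^* \mtx{C} \mtx{U} = \diag(\vct{c})$ for some $\vct{c} \in \mathbb{C}^n$, which means $\theta_U(\vct{c}) = \mtx{C}$, i.e., $\vct{c} = \theta_U^{-1}(\mtx{C})$ is in the preimage. Now the matrix $\mtx{U}^* \mtx{C} \mtx{U}$ is a product of three unitary matrices, hence unitary. A diagonal matrix $\diag(\vct{c})$ is unitary if and only if $\diag(\vct{c}) \diag(\vct{c})^* = \mtx{I}$, which forces $|c_i|^2 = 1$ for every $i$. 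Hence each $c_i \in S^1$, and therefore $\vct{c} \in S^1 \times \cdots \times S^1 = T^n$. Taking the union over all $\mtx{C} \in \mathcal{A}$ gives $\theta_U^{-1}(\mathcal{A}) \subseteq T^n$.

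The main obstacle is really just the simultaneous diagonalization step; everything else is a short calculation. The inductive argument above is standard, but some care is needed when $\mathcal{A}$ is allowed to be infinite, since one must ensure the spectral decomposition of a single $\mtx{C}_0$ still yields invariant subspaces for every element of $\mathcal{A}$ (which it does, pointwise) and that the inductive step is applied to the restricted family on each eigenspace regardless of its cardinality; the induction is on the ambient dimension, not on the size of $\mathcal{A}$, so this goes through without issue.
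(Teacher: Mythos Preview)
Your proof is correct and follows essentially the same route as the paper: invoke simultaneous unitary diagonalization of commuting normal matrices, then observe that the diagonal entries must have modulus one. The only difference is level of detail---the paper simply cites the simultaneous diagonalization theorem and the fact that eigenvalues of a unitary lie on $S^1$, whereas you supply a full inductive proof of the former and rephrase the latter as ``a product of unitaries is unitary''; both amount to the same argument.
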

\begin{proof}
A commuting family of matrices can be simultaneously diagonalized, and unitary matrices are normal matrices and hence can be diagonalized by some unitary matrix. Therefore, there exists a unitary matrix $\mtx{U}$ that diagonalizes any $\mtx{C} \in \mathcal{A}$. For any $\mtx{C} \in \mathcal{A}$, $\mtx{U}^* \mtx{C} \mtx{U} = \diag(\vct{c})$ for some $\vct{c} \in \mathbb{C}^n$. Note that entries of $\vct{c}$ are eigenvalues of $\mtx{C}$ since $\mtx{U}$ is unitary. Hence, $\vct{c} \in T^n$ since $\mtx{C}$ is a unitary matrix itself and eigenvalues of a unitary matrix are complex numbers with magnitude $1$. Finally, $\theta_U$ is a one-to-one map and $\theta_U^{-1}(\mtx{C}) = \vct{c}$ since $\mtx{C} = \mtx{U} \diag(\vct{c}) \mtx{U}^*$.
\end{proof}

To ensure solvability of \eqref{eq:opt}, we will need completeness for $\mathcal{A}$.

\begin{definition}[Complete Commuting Family of Unitary Matrices]
A commuting family of unitary matrices $\mathcal{A}$ is called complete if its preimage under the parametrization map is the entire torus. In other words, $\theta_U^{-1}(\mathcal{A}) = T^n$.
\end{definition}

\begin{proof}[Proof of Corollary~\ref{cor:bccb}]
For any $\mtx{C} \in \mathcal{N}_{m,n}$, $\mtx{F}_{m,n}^* \mtx{C} \mtx{F}_{m,n}$ is a diagonal matrix. This shows $\mathcal{N}_{m,n}$ is a commuting family, and the unitary matrix $\mtx{U}$ that diagonalizes all elements of the family is $\mtx{F}_{m,n}$. By its definition, $\mathcal{N}_{m,n}$  consists of unitary matrices. To prove it is complete, we need to show $\theta_{F_{m,n}} (\vct{c}) \in \mathcal{N}_{m,n}$, for any $\vct{c} \in T^n$. Denote $\mtx{C}_0 = \theta_{F_{m,n}} (\vct{c}) = \mtx{F}_{m,n} \diag(\vct{c}) \mtx{F}_{m,n}^*$. On the one hand, thanks to Lemma~\ref{lem:2d-dft}, $\mtx{C}_0 \in \mathcal{BC}_{m,n}$ since $\mtx{F}_{m,n}^* \mtx{C}_0 \mtx{F}_{m,n}$ is diagonal. On the other hand, $\mtx{C}_0$ is unitary since
\begin{align*}
\mtx{C}_0 \mtx{C}_0^* 
&= \mtx{F}_{m,n} \diag(\vct{c}) \mtx{F}_{m,n}^* \mtx{F}_{m,n} \diag(\vct{c^*}) \mtx{F}_{m,n}^* \\
&= \mtx{F}_{m,n} \diag(\vct{c} \odot \vct{c^*}) \mtx{F}_{m,n}^* \\
&= \mtx{F}_{m,n} \mtx{F}_{m,n}^* \\
& = \mtx{I}_{mn}.
\end{align*}
Here, $\vct{c} \odot \vct{c^*} = \boldsymbol{1}$ is the entrywise product and we used the fact that $\vct{c} \in T^n$. This shows $\theta_{F_{m,n}} (\vct{c}) = \mtx{C}_0 \in \mathcal{N}_{m,n}$.

To show the last statement of the corollary,  we will use the fact that the eigenvalues of a real circulant matrix are conjugate symmetric, and vice versa (cf.~\cite{davis1979circulant}, p. 72-76.).
For any $\mtx{C} \in \mathcal{N}_{m, n}$, $\mtx{C}$ can be written as $\mtx{C} = \sum_i \mtx{E}_i \otimes \mtx{C}_i$, where $\mtx{C}_i$ are circulant blocks of $\mtx{C}$ and $\mtx{E}_i = \circulant(\vct{e}_i)$. Hence, $\mtx{F}_{m,n}^* \mtx{C} \mtx{F}_{m,n} = \sum_i \mtx{F}_{m}^* \mtx{E}_i \mtx{F}_{m} \otimes \mtx{F}_{n}^* \mtx{C}_i \mtx{F}_{n}$. Since $\mtx{E}_i$ and $\mtx{C}_i$ are real matrices, both $\mtx{F}_{m}^* \mtx{E}_i \mtx{F}_{m}$ and $\mtx{F}_{n}^* \mtx{C}_i \mtx{F}_{n}$ are diagonal matrices with conjugate symmetric diagonals. Then, $\diag(\mtx{F}_{m}^* \mtx{E}_i \mtx{F}_{m})$ and $\diag(\mtx{F}_{n}^* \mtx{C}_i \mtx{F}_{n})$ are normalized as in \eqref{eq:normalized-vct}. The resulting vectors are still conjugate symmetric. Denote these two vectors $\vct{v}^{(1)}_i$ and $\vct{v}^{(2)}_i$, both column vectors. Thus, $\vct{c}_o = \sum_i \vct{v}^{(1)}_i \otimes \vct{v}^{(2)}_i$, and
$
\mtx{C}_o = \mtx{F}_{m, n} \diag^{-1}(\vct{c}_o) \mtx{F}_{m, n}^* = \sum_i \mtx{M}^{(1)}_i \otimes \mtx{M}^{(2)}_i,
$ where
\begin{align*}
\begin{cases}
\mtx{M}^{(1)}_i = \mtx{F}_{m} \diag^{-1} \left(\vct{v}^{(1)}_i\right) \mtx{F}_{m}^* \\
\mtx{M}^{(2)}_i = \mtx{F}_{n} \diag^{-1} \left(\vct{v}^{(2)}_i\right) \mtx{F}_{n}^*
\end{cases}
\end{align*}
Both $\mtx{M}^{(1)}_i$ and $\mtx{M}^{(2)}_i$ are real, and thus $\mtx{C}_o$ is also real.
\end{proof}


\if 0
\subsection{Subsampled  BCCB matrices with orthogonal rows}

\begin{definition}[Downsampling Operator]
For any $1 \leq k \leq n$, a downsampling operator is 1-1 map $\psi$ from $\mathbb{Z}_k$ to $\mathbb{Z}_n$, where $\mathbb{Z}_p = \{ 1, 2, \dots, p \}$.
\end{definition}
What the downsampling operator $\psi$ does is choosing $k$ elements from $\mathbb{Z}_n$ in a certain order without replacement. We will use $\psi$ to sample the rows of a matrix from $\mathcal{N}_{m,n}$.

\begin{definition}[Subsampled Unitary BCCB]
The Subsampled Unitary BCCB $\mtx{B}$ formed by downsampling $\mtx{C} \in \mathcal{N}_{m,n}$ using downsampling operator $\psi: \mathbb{Z}_s \rightarrow \mathbb{Z}_{mn}$ is an $s \times mn$ matrix given by
\begin{equation*}
(\mtx{B})_{i, j} = (\mtx{C})_{\psi (i), j}, \quad \forall i \in \mathbb{Z}_s, j \in \mathbb{Z}_{mn}.
\end{equation*}
We denote $\psi(\mtx{C}) = \mtx{B}$. The collection of all such matrices is denoted $\mathcal{S}_{s, m, n; \psi}$, which we will simply refer to as $\mathcal{S}_{s, m, n}$.
\end{definition}

A subsampled unitary BCCB is formed by certain rows of some unitary block circulant matrix with circulant blocks whose indices are determined by a fixed downsampling operator.


Consider a downsampling operator $\psi: \mathbb{Z}_{s} \rightarrow \mathbb{Z}_{mn}$. Let $\mathcal{A}$ be a complete commuting family of unitary matrices. If we subsample every element of $\mathcal{A}$ using $\psi$, we end up getting a set of subsampled unitary matrices,
\begin{equation*}
\mathcal{B} = \{ \psi(\mtx{C}): \mtx{C} \in \mathcal{A} \}.
\end{equation*}
Now, consider the following optimization problem,
\begin{equation} \label{opt-sub}
\min_{\mtx{B} \in \mathcal{B}} \quad \| \mtx{W} - \mtx{B} \|_F,
\end{equation}
where $\mtx{W}$ is a given matrix with dimension of $s \times mn$. Define the \textit{zero-padding map} $\rho_\psi: \mathbb{C}^{s \times mn} \rightarrow \mathbb{C}^{mn \times mn}$ as the following:
\begin{align*}
( \rho_\psi (\mtx{W}) )_{i,j} =
\begin{cases}
w_{i,j}, \quad &i \in \image \psi, \\
0, \quad &i \notin \image \psi,
\end{cases}
\end{align*}
for any $i, j \in \mathbb{Z}_{mn}$. Observe that any solution $\mtx{C}_0$ to
\begin{align*}
\min_{\mtx{C} \in \mathcal{A}} \| \rho(\mtx{W}) - \mtx{C} \|_F
\end{align*}
gives a solution $\psi(\mtx{C}_0)$ to \eqref{opt-sub}. This is because for any $\mtx{C} \in \mathcal{A}$,
\begin{align*}
\| \rho(\mtx{W}) - \mtx{C} \|_F^2 
&= \| \mtx{W} - \psi(\mtx{C}) \|^2 + \sum_{i \notin \image \psi} \| \vct{c}_i \|^2 \\
&= \| \mtx{W} - \psi(\mtx{C}) \|^2 + mn - k,
\end{align*}
where $\vct{c}_i$ is the $i$-th row of $\mtx{C}$ and is a unit vector.

\begin{theorem} \label{thm:sol-sub}
If $\mathcal{B}$ is a family of subsampled commuting unitary matrices, then an optimization problem of the form \eqref{opt-sub} can be converted into a problem of commuting family of unitary matrices of the form \eqref{eq:opt} using the zero-padding operator.
\end{theorem}

Downsampled from $\mathcal{N}_{m,n}$ by some downsampling operator $\psi$, the collection of subsampled unitary BCCB $\mathcal{S}_{s, m, n}$ is obviously a special case since $\mathcal{S}_{s, m, n} = \psi (\mathcal{N}_{m,n})$. Therefore, the optimization problem below can be solved by following Theorem~\ref{thm:sol-sub} and Theorem~\ref{thm:sol}.
\begin{align*}
\begin{split}
\min \quad& \| \mtx{P}_s - \mtx{C} \|_F \\
s.t. \quad& \mtx{C} \in \mathcal{S}_{s, m, n}
\end{split}
\end{align*}

\fi


\section{Discussion and Conclusion}
We have introduced the first steps towards developing a principled hybrid hardware-software framework that has the potential to significantly reduce the computational complexity and memory requirements of on-device machine learning. At the same time, the proposed framework raises several challenging questions. How much can we compress data so that we can still with high accuracy conduct the desired machine learning task? In certain cases literature offers some answers (see e.g.~\cite{reboredo2013compressive,tremblay2016compressive,mcwhirter2018squeezefit}), but for more realistic scenarios
research is still in its infancy. Moreover, ideas from transfer learning should help in designing efficient deep networks for compressive input. We hope to address some of these questions in our future work.

\subsubsection*{Acknowledgments}
The authors want to thank Donald Pinckney for discussions and initial simulations related to the topic of this paper.
The authors.\ acknowledge partial support from NSF via grant DMS 1620455 and from NGA and NSF via grant DMS 1737943. 

\bibliography{compdl}
\bibliographystyle{alpha}

\end{document}